\title{CounteRGAN: Generating Realistic Counterfactuals with Residual Generative Adversarial Nets}
\author{Daniel Nemirovsky\footnotemark \\
  Amazon\\
  Seattle, WA\\
  U.S.A.\\
  \texttt{nemird@amazon.com} \\
  \And
Nicolas Thiebaut \\ 
  Hired\\
  San Francisco, CA\\
  U.S.A.\\
  \texttt{nicolas@hired.com} \\
  \And
Ye Xu\textsuperscript{*}\\
  Facebook\\
  Menlo Park, CA\\
  U.S.A.\\
  \texttt{yexu@fb.com} \\
  \And
Abhishek Gupta\textsuperscript{*}\\
  Facebook\\
  Menlo Park, CA\\
  U.S.A.\\
  \texttt{abigupta@fb.com} \\
  }
\date{}
\newtheorem{prop}{Proposition}
\begin{document}

\maketitle


\begin{abstract}
The prevalence of machine learning models in various industries has led to growing demands for model interpretability and for the ability to provide meaningful recourse to users. For example, patients hoping to improve their diagnoses or loan applicants seeking to increase their chances of approval. Counterfactuals can help in this regard by identifying input perturbations that would result in more desirable prediction outcomes. Meaningful counterfactuals should be able to achieve the desired outcome, but also be realistic, actionable, and efficient to compute. Current approaches achieve desired outcomes with moderate actionability but are severely limited in terms of realism and latency. To tackle these limitations, we apply Generative Adversarial Nets (GANs) toward counterfactual search. We also introduce a novel Residual GAN (RGAN) that helps to improve counterfactual realism and actionability compared to regular GANs. The proposed CounteRGAN method utilizes an RGAN and a target classifier to produce counterfactuals capable of providing meaningful recourse. Evaluations on two popular datasets highlight how the CounteRGAN is able to overcome the limitations of existing methods, including latency improvements of $>$50x to $>$90,000x, making meaningful recourse available in real-time and applicable to a wide range of domains.
\end{abstract}

\section{Introduction \label{sec:intro}}
\noindent

\renewcommand{\thefootnote}{\fnsymbol{footnote}}
\footnotetext[1]{Work done while at Hired.}
\addtocounter{footnote}{-1}
\renewcommand{\thefootnote}{\arabic{footnote}}

\begin{figure}[t]
\centering
\includegraphics[width=0.7\columnwidth]{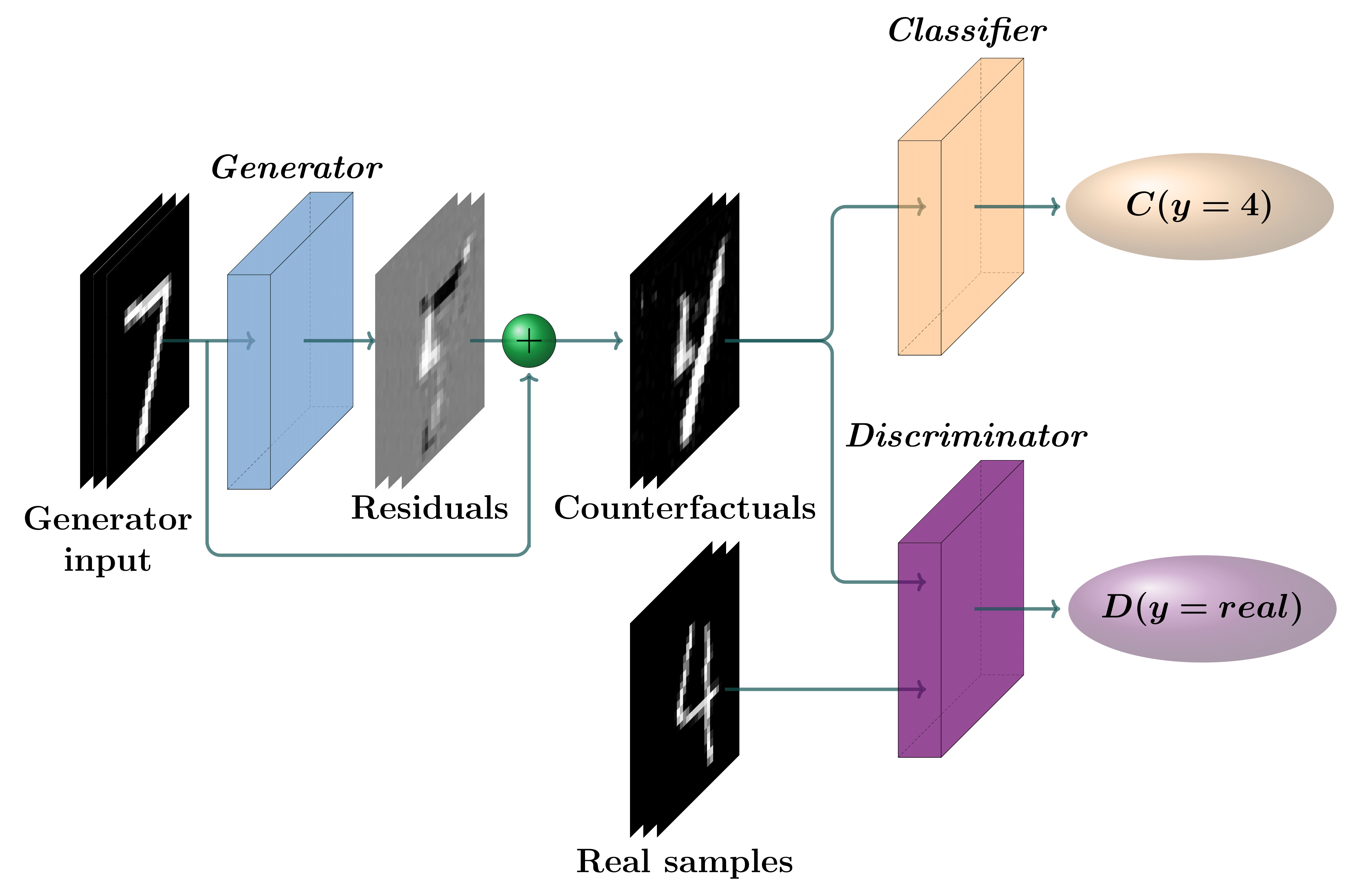}
\caption{The CounteRGAN method on an example from MNIST. Three neural networks are used, a generator trained to output residuals, a discriminator trained to distinguish realistic data, and a target classifier. The model's loss function uses the output from both the classifier and discriminator. The example shows a generator outputting residuals that, when added with the input, produces realistic counterfactual images of a "4".}
\label{fig:countergan_arch}
\end{figure}



Machine learning (ML) predictive models have been widely applied to provide instrumental information in daily scenarios. For instance, in healthcare to diagnose patients \cite{Miotto2018-ah}, finance for loan approvals \cite{Addo2018-gl}, recruiting for matching candidates to jobs \cite{Faliagka2012-pg}, and criminal justice for predicting recidivism \cite{Tollenaar2013-no}. This pervasiveness has resulted in a growing demand for model interpretability as well as influential discussions regarding the "right to explanation" in the machine learning and legal communities \cite{Wachter2017-vk, Selbst2017-zd, Goodman2016-co}. As a result, considerable effort has been made to develop interpretability methods targeted not just towards prediction explainability, but also for improving fairness and opportunity by providing recourse to users.

Leading explainability methods \cite{Ribeiro2016-fa, Lundberg2017-ly, Sundararajan2017-cq, Selvaraju_2019, Chattopadhay_2018} have shown great promise in illuminating the often opaque logic and feature influences behind a model's prediction. By answering the question of \textit{why} a model predicted the outcome it did, explainability methods are useful for validating training and highlighting problematic biases related to sensitive issues such as wealth, race, and gender. Recourse, by contrast, aims at providing interpretable and actionable feedback to users. It helps to answer \textit{how} a prediction can be altered or reversed. It does so by suggesting certain changes, or perturbations, to make to the input data. Recourse specifies, for example, what changes an individual should make to improve their chance of receiving a better medical diagnosis or being approved for a loan or job interview. 

Alternative scenarios, or hypotheticals, that rely on perturbations to the original input values are known as \textit{counterfactuals}. The impacts on a prediction from the changes suggested by counterfactuals can be useful for model interpretability as well as for providing recourse to users. For instance, predictor biases could be detected if the counterfactual suggests changing one's gender or race to alter the prediction result. Recourse, on the other hand, requires providing interpretable feedback that is reasonable for users to act upon and which would help change the prediction result in their favor. To enable recourse, it is imperative that the counterfactuals be \textit{meaningful counterfactuals}, defined as being \textit{realistic}, \textit{computationally efficient} to generate, and able to provide \textit{actionable} feedback to the user that would help \textit{achieve the desired prediction} outcome given a target predictor.\footnote{A pedagogical example of counterfactual search using a synthetic dataset is provided in the supplementary material.} 

Whereas computational efficiency and prediction outcome can be easily quantified using latency and predictor score, realism and actionability are more subjective in nature. Realism relates to how well the counterfactual resembles or fits-in with the known data distribution. For example, a house with a negative number of bedrooms is evidently unrealistic. A less obvious example, however, would be of a house with a seemingly extreme layout but where realism is dependent upon the location and society.\footnote{One of the authors recalls the wonderment of seeing the tall and narrow Dutch houses neatly packed into picturesque rows lining idyllic canals. Consider how surreal such homes would appear in the Andean mountain villages or vice versa.} Actionability, on the other hand, pertains to whether suggested changes are interpretable and reasonable for a user to act upon. Improving one's body mass index, learning a new programming language, or reducing outstanding debt are examples of actionable changes (granted some are harder than others). Proximity and sparsity can serve as intuitive yet imperfect proxies for actionability since they represent the magnitude and number of perturbations suggested by a counterfactual. 
A realistic counterfactual may not always lead to actionable changes. For instance, it is not reasonable to reduce one's age or education even though this may lead to a counterfactual which could very well describe a realistic individual. Moreover, depending on the use case, actionable changes may result in unrealistic counterfactuals. For instance, manipulating pixels and text or fixing features to specific values can confuse the target classifier in a manner similar to adversarial attacks which exploit seldom used regions of a classifier's decision boundary. 

Existing recourse methods \cite{Wachter2017-jr, Van_Looveren2019-hr, Mothilal2020-ge} employ variations of regularized gradient descent to perform the counterfactual search. This method is acutely latency constrained due to having to perform a separate counterfactual search for each unique input data point. Low counterfactual realism also hinders approaches that do not include explicit realism constraints in their algorithm \cite{Wachter2017-jr} or conflate realism with actionability \cite{Mothilal2020-ge}. The latency constraints and distinction between realism and actionability are crucial for framing the counterfactual search problem as a natural fit for Generative Adversarial Networks (GANs). GANs\cite{Goodfellow2014-wf} are a class of ML models capable of producing strikingly realistic synthetic data with low and fixed latencies.
These models formulate the training of two artificial neural networks, a generator and a discriminator, as an adversarial game. The discriminator is trained to distinguish realistic data while the generator aims to synthesize data that is able to fool the discriminator. An effectively trained generator will be able to produce realistic data requiring only a forward-pass through the neural network.




In this work, we formalize a \textit{Residual GAN (RGAN)} architecture, useful for generating perturbations directly and alleviating mode collapse. This later issue is a common training breakdown where the generator consistently produces identical or similar outputs, regardless of the inputs. The RGAN is used in conjunction with a fixed target classifier to generate meaningful counterfactuals that are suitable for providing recourse to users and improving model interpretability and fairness. The resulting method, termed \textit{CounteRGAN}, is capable of generating meaningful counterfactuals that meet or exceed prediction gain and actionability of two state-of-the-art methods while significantly improving realism and reducing latency by 2 to 7 orders of magnitude. Figure \ref{fig:countergan_arch} provides a clarifying illustration of the CounterGAN architecture applied to an example from MNIST.
The proposed technique enables providing real-time recourse to users of ML predictors deployed in a wide range of industries. The goal is to help improve the opportunity, transparency, and fairness afforded by ML predictors. The main contributions of this work include:

\begin{itemize}
\item The application of GANs to produce meaningful counterfactuals that can provide real-time recourse to users as well as improved model interpretability and fairness. 

\item Formalizing a novel \textit{Residual Generative Adversarial Network (RGAN)} that trains the generator to produce residuals that are intuitive to the notion of perturbations used in counterfactual search. This model is also shown to alleviate mode collapse.

\item The \textit{CounteRGAN} method which applies an RGAN model in conjunction with a target classifier to produce meaningful counterfactuals. It does so by 2 to 7 orders of magnitude faster than existing methods, enabling real-time applicability.

\item A CounteRGAN loss variant for when the existing classifier’s gradients or architecture is unknown (e.g., a black-box model). A proof of convergence is also provided.

\end{itemize}

\section{Related Work\label{sec:related_work}}

\subsection{Counterfactuals\label{sec:counterfactuals}} 
\noindent Borrowing from philosophy and causality \cite{Lewis1973-al, pearl_causality}, counterfactuals were introduced as explanations for ML predictors by Wachter et al. \cite{Wachter2017-jr}. The authors formulated counterfactual search as a minimization problem with an added regularization term to enforce feature perturbation sparsity. Given an original data point $x$ and a ML classifier $C$, the counterfactual $x_\text{cf}$ is produced using iterations of gradient descent to increase the classifier's prediction $C_t\left(x_\text{cf}\right)$ for a given target class $t$. This approach is useful for producing counterfactuals of the desired class but tends to be slow and results may be unrealistic. 


Several approaches have targeted increasing counterfactual realism. These include a graph-based density approach \cite{Poyiadzi2019-rr} and applying an autoencoder reconstruction error term to constrict the counterfactual from straying too far from the observed feature space \cite{Dhurandhar2018-hk}. An alternative approach \cite{Mothilal2020-ge} focuses on producing multiple diverse counterfactuals for each query instance such that the user can select the most relevant. A novel technique proposed utilizing class prototypes \cite{kim_2016} to guide the counterfactual search toward high-density regions of the feature space \cite{Van_Looveren2019-hr}. 
While the aforementioned methods are limited to differentiable classifiers, a heuristic search involving "growing spheres" is used \cite{Laugel2017-tp} to produce sparse counterfactuals for non-differentiable or black-box models. This method, however, does not further address realism nor latency concerns. All of the approaches mentioned above suffer from high computational latencies. The proposed CounteRGAN method, however, is able to produce meaningful counterfactuals within real-time latency constraints for both differentiable and non-differentiable models.

Counterfactuals are also produced in adversarial perturbation techniques \cite{Goodfellow2014-yo}. For example, modifying a single pixel in an image of a horse to fool a classifier into predicting it is an image of a frog \cite{Su2017-pk}. In general, these methods are aimed at confusing a target classifier without necessarily providing meaningful recourse to users; a task that requires balancing desired prediction with realism and actionability.



\subsection{Generative Adversarial Nets (GANs)}
The introduction of GANs \cite{Goodfellow2014-wf} marked a milestone in the field of generative models. The elegance of a GAN lies in its formulation of training as an adversarial minimax game between two differentiable models able to approximate probability distributions utilizing backpropagation and gradient descent. Interest in GANs has since intensified and several novel approaches have been proposed towards improving training \cite{Salimans2016-wx, Arjovsky2017-rx} and architecture \cite{Radford2015-qx, Denton2015-ka, Zhang2016-qc}. Providing additional input such as label information to condition GANs, for example, to generate specific MNIST digits, has been previously proposed \cite{Mirza2014-kt, Odena2017-jl}. GANs have also been applied to problems that share intuitive notions with counterfactuals such as representation learning \cite{Chen2016-tt, Tran2017-vc}, image-to-image translation \cite{Isola2016-kf, Zhu2017-mz, Zhu2017-sf}, style transfer \cite{Huang2017-ii, Karras2020-uk}, and illumination \cite{Wang2017-fj, Zhang2019-kr}. The use of GANs with residual images has been proposed for attribute manipulation in images \cite{Shen2016-ge}. These methods are domain-specific and often target realism instead of reverting decisions of existing classifiers and providing actionable feedback to users. An unrelated but similarly termed "Residual GAN" \cite{Tavakolian2019-kq} uses a deep residual convolutional network to a generator to magnify subtle facial variations. In contrast, we define and use a Residual GAN, where the generator is trained to synthesize residuals directly. Unlike prior work, and to the best of our knowledge, we are the first to apply GANs towards the generation of meaningful counterfactuals for recourse.




\section{GAN-based Counterfactual Generation\label{sec:countergan}}
\noindent 
To overcome the mode collapse and actionability limitations of applying standard GANs to counterfactual generation, we formalize the Residual GAN (RGAN) as a special case of GAN. 
The CounteRGAN, by contrast, is the proposed technique that couples an RGAN with a target classifier to synthesize meaningful counterfactuals.

\subsection{Residual GAN (RGAN)\label{sec:rgan}}
\noindent 
Similar to how conditional GANs \cite{Mirza2014-kt}, though initially motivated by image synthesis, have been generalized to be applicable to several domains, we also introduce a generalized RGAN formulation, whose original motivation stemmed from generating counterfactuals, but could also be applied to other domains including image synthesis and photo editing \cite{Zhang2019-kr}. The generalized RGAN is a special instance of a GAN where the generator generates residuals instead of a complete synthetic data point. As in standard GANs, a discriminator $D$ and generator $G$ are trained in a minimax game framework where the generator seeks to minimize and the discriminator aims to maximize the following value function:

\begin{equation}
\label{eq_rgan_generalized}
    \mathcal{V}_{\mathrm{RGAN}}(D, G)=\mathbb{E}_{x \sim p_{\mathrm {data}}}\log D(x)
    + \mathbb{E}_{z \sim p_{\mathrm {z}}}\log \left(1-D(z+G(z))\right),
\end{equation}
where the generator's input $z\in \mathbb Z$ is a latent variable sampled from a probability distribution $p_{\mathrm {z}}$. The input to the RGAN discriminator is $z+G(z)$, as opposed to the standard GAN which utilizes $G(z)$ directly.



The generalized RGAN formulation restricts the dimensionality of the latent (input) space to be the same as the data feature (output) space ($\mathbb Z=\mathbb{X}$).\footnote{This constraint could be overcome by utilizing an autoencoder. The synthesized data point $z+G(x)\in \mathbb{Z}$ can then be decoded to a new data point in the same space as the input data, such that $\mathrm{decoder}(z+G(x))\in \mathbb{X}$} and forces the generator to learn contingent relationships between its input and output. This constraint enables fine-grained regularization directly on the residuals \footnote{Note that the activation function for the generator's output layer constrains the residuals and therefore their impact on the final synthesized output. Thus, depending on the scenario, it is recommended to use a symmetric activation function (e.g., linear, tanh) capable of outputting positive and negative values within the same order of magnitude as the input features.} and helps to alleviate mode collapse caused when the GAN generates similar output regardless of its input which it learns to ignore. 

\subsection{CounteRGAN}
\noindent 
The proposed counterfactual search method, termed CounteRGAN, utilizes an RGAN and a fixed target classifier $C$ to produce meaningful counterfactuals for providing recourse to users and improved interpretability. The method is capable of producing counterfactuals that are of the desired target class, realistic, actionable, and require low computational latency. Below we present two variants of the CounteRGAN value function for when the classifier's gradients are and are not known. The search process seeks to maximize the value function with respect to the discriminator $D$ and minimize it with respect to the generator $G$.



If the classifier is known and differentiable, then the following CounteRGAN value function can be used:

\begin{equation*}
\label{eq_countergan_diff}
 \mathcal{V}_\text{CounterRGAN}(G, D)=\mathcal{V}_{\mathrm{RGAN}}(G, D) +\mathcal{V}_{\mathrm{CF}}(G, C, t) \\
 + \mathrm{Reg}(G(x)),
\end{equation*}
where $t$ is the target class. The first term ($\mathcal{V}_{\mathrm{RGAN}}$) uses a specialized RGAN that reads:

\begin{equation}
\label{eq_rgan_specific}
    \mathcal{V}_{\mathrm{RGAN}}(D, G)=\mathbb{E}_{x \sim p_{\mathrm {data}}}\log D(x) 
    + \mathbb{E}_{x \sim p_{\mathrm {data}}}\log \left(1-D(x+G(x))\right),
\end{equation}
where both the generator $G$ and discriminator $D$ use inputs samples $x$ from the same probability distribution $p_{\mathrm {data}}$. In isolation, this formulation would result in the generator simply learning the identity function, leading to null residuals since the inputs are already realistic data. However, since the generator is also required to account for the classifier's loss term $\mathcal{V}_{\mathrm{CF}}$, this formulation helps to enforce counterfactual realism.

The term ($\mathcal{V}_{\mathrm{CF}}$) drives the counterfactual toward the desired class $t$, it reads:

\begin{equation}
\mathcal{V}_{\mathrm{CF}}(G, C, y)=\mathbb{E}_{x \sim p_{\mathrm{data}}} \log \left(1-C_t(x+G(x))\right).
\end{equation}

The last term of the CounteRGAN value function, $\mathrm{Reg}(G(x))$, can be any weighted combination of L1 and L2 regularization terms and helps to control the sparsity and amplitude of the residuals (i.e., feature perturbations) which serves as a proxy for counterfactual actionability.


While most existing counterfactual search methods target differentiable models, the target classifiers used in production settings may often be non-differentiable or unknown (black-box).\footnote{For example, while a bank employee may have access the a loan classifier's architecture, the same cannot necessarily be said about the customer or a third-party service.} To account for such scenarios, we introduce a second CounteRGAN value function termed CounteRGAN-bb for black-box models. Instead of computing a classifier's gradients, this variant weighs the first term of the RGAN value function by the classifier's prediction score $C_t(x_i)$ such that the corresponding value function reads
\begin{equation}
\label{eq_countergan_nondiff}
    \mathcal{V}_{\mathrm{CounteRGAN-wt}}(D, G)=\frac{\sum_i C_t(x_i) \log D(x_i)}{\sum_i C_t(x_i)} 
+ \frac{1}{N} \sum_i \log \left(1-D(x_i+G(x_i))\right) + \mathrm{Reg}(G, \left\{x_i\right\}),
\end{equation}
where $\mathrm{Reg}(G, \left\{x_i\right\})$ is analogous to the regularization term introduced previously and samples $x_i$ are drawn from the entire data distribution.
$$ \mathrm{Reg}\left(G, \left\{x_i\right\}\right) = \alpha \sum_i \Vert G(x_i)\Vert_1 + \beta \sum_i \Vert G(x_i)\Vert_2^2.$$ 

The specific form of this value function is motivated by the resulting convergence properties. 
\begin{prop} 
If the discriminator is systematically allowed to reach its optimum, and the generator has sufficient capacity, then the minimax optimization of the value function from equation \ref{eq_countergan_nondiff} converges to the Nash equilibrium. The full generator's output distribution $p_{g_+}$ converges to a distribution $p_{C_t}$ defined by

\begin{equation}
p_{C_t}(x) = \mathcal N_t \; C_t(x) \; p_\mathrm{data}(x),
\end{equation}
\newline
\noindent where $N_t$ is a normalization constant.\footnote{Explicitly, $\mathcal N_t= \left(\int C_t(x) \; p_\mathrm{data}(x) \mathrm{d}x\right)^{-1}$ but it doesn't need to be computed for our purpose.} 
\end{prop}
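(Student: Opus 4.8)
The plan is to recognize that, in the population limit, the value function in equation \ref{eq_countergan_nondiff} is precisely the original GAN objective of Goodfellow et al.~\cite{Goodfellow2014-wf}, but with the real-data distribution replaced by the reweighted distribution $p_{C_t}$ and the generator distribution replaced by $p_{g_+}$, the law of $x+G(x)$ for $x\sim p_\mathrm{data}$. Once this correspondence is established, the convergence argument reduces to the standard two-step analysis, and the only genuinely new ingredient is checking that the $C_t$-weighting reproduces exactly the claimed target distribution.

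First I would pass from the empirical sums to expectations using the law of large numbers (the ``sufficient samples'' regime). The first term $\frac{\sum_i C_t(x_i)\log D(x_i)}{\sum_i C_t(x_i)}$ converges to $\frac{\mathbb{E}_{x\sim p_\mathrm{data}}[C_t(x)\log D(x)]}{\mathbb{E}_{x\sim p_\mathrm{data}}[C_t(x)]}$, which by the definition of $\mathcal N_t$ equals $\int p_{C_t}(x)\log D(x)\,\mathrm{d}x$. The second term converges to $\mathbb{E}_{x\sim p_\mathrm{data}}\log(1-D(x+G(x)))=\int p_{g_+}(y)\log(1-D(y))\,\mathrm{d}y$. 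Dropping the generator-only regularizer for the equilibrium analysis, the objective becomes
\begin{equation*}
\mathcal V(D,G)=\int p_{C_t}(x)\log D(x)+p_{g_+}(x)\log(1-D(x))\,\mathrm{d}x.
\end{equation*}

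Next, following \cite{Goodfellow2014-wf}, I would fix $G$ and maximize pointwise: since $a\log D+b\log(1-D)$ is maximized at $D=a/(a+b)$, the optimal discriminator is $D^*_G(x)=p_{C_t}(x)/(p_{C_t}(x)+p_{g_+}(x))$ (this is where the hypothesis that the discriminator reaches its optimum is used, and the regularizer, depending only on $G$, leaves this step untouched). Substituting $D^*_G$ back and adding and subtracting $\log 2$ inside each logarithm rewrites the resulting virtual objective as $-\log 4+2\,\mathrm{JSD}(p_{C_t}\,\|\,p_{g_+})$, where $\mathrm{JSD}$ denotes the Jensen--Shannon divergence. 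Because $\mathrm{JSD}\ge 0$ with equality if and only if its two arguments coincide, the generator's minimum is attained exactly when $p_{g_+}=p_{C_t}$; the sufficient-capacity hypothesis then guarantees a generator realizing this law exists, which gives the Nash equilibrium.

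The main obstacle I anticipate is the regularization term. Since $\mathrm{Reg}(G,\{x_i\})$ penalizes $G$ but not $D$, it preserves the optimal-discriminator step, yet it perturbs the generator's objective away from pure $\mathrm{JSD}$ minimization, so the stated limit $p_{g_+}=p_{C_t}$ holds exactly only as the weights $\alpha,\beta\to 0$ (or must be read as the unregularized equilibrium). I would state this explicitly, treating $\mathrm{Reg}$ as a controllable perturbation whose role is to select among near-equilibrium generators favoring sparse, small residuals rather than to shift the limiting distribution. The remaining care is measure-theoretic: the pointwise maximization and the $\mathrm{JSD}$ rewriting require $p_{C_t}$ and $p_{g_+}$ to be well-defined densities with finite integrals, which follows from $C_t$ being a bounded classifier score and $p_\mathrm{data}$ a genuine probability density.
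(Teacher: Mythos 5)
Your proposal is correct and follows the paper's proof up to and including the reduction of the weighted objective to $\mathbb{E}_{x\sim p_{C_t}}\log D(x)+\mathbb{E}_{x\sim p_{g_+}}\log(1-D(x))$ and the identification of the optimal discriminator $D^*(x)=p_{C_t}(x)/(p_{C_t}(x)+p_{g_+}(x))$; where you diverge is the final minimization over $p_{g_+}$. You invoke the Jensen--Shannon identity $\mathcal V^*(G)=-\log 4+2\,\mathrm{JSD}(p_{C_t}\,\Vert\,p_{g_+})$ and conclude from nonnegativity of the divergence, i.e.\ you port Goodfellow's Theorem~1 wholesale to the reweighted distribution. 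The paper instead runs a calculus-of-variations argument: it imposes the normalization constraint $\int p_{g_+}=1$ with a Lagrange multiplier, sets the functional derivative $\log(q/(p+q))+\mu$ to zero, solves $p_{g_+}^*=p_{C_t}/(e^{\mu}-1)$, and fixes $e^{\mu}=2$ by normalization, checking convexity via the second functional derivative. The two routes are equivalent; yours is shorter and leans on a known closed form, while the paper's is self-contained and exhibits uniqueness and convexity explicitly without naming the JSD. Two smaller points: your explicit discussion of the regularizer (that it preserves the optimal-discriminator step but shifts the generator's optimum unless $\alpha,\beta\to 0$) is a genuine improvement in care over the paper, which silently drops $\mathrm{Reg}$ when rewriting the value function; on the other hand, you stop at characterizing the equilibrium and do not address why the alternating gradient updates actually reach it --- the paper closes this by citing Proposition~2 of Goodfellow et al., and you should add the analogous remark.
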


\begin{proof}
We first introduce the full generator output function $G_+(x) = x + G(x)$, and note that the value function defined by equation \ref{eq_countergan_nondiff} can be written as 
\begin{equation}
\label{eq_countergan_nondiff_expectations}
\begin{aligned}
    &\mathcal{V}_{\mathrm{CounteRGAN-bb}}(D, G)=\mathbb{E}_{x \sim p_{C_t}} \log D(x) \\ 
    &\qquad\qquad+ \mathbb{E}_{x \sim p_{g_+}} \log \left(1-D(x)\right),
\end{aligned}
\end{equation}
since the first term on the r.h.s. of Equation \ref{eq_countergan_nondiff} is a weighted sampling estimate of $\mathbb{E}_{x \sim p_{C_t}} \log D(x)$, and for the second term, the equality $\mathbb{E}_{x \sim p_{g_+}} \log \left(1-D(x)\right)=\mathbb{E}_{x \sim p_{\mathrm{data}}} \log \left(1-D(G_+(x))\right)
$ is a consequence of the Radon–Nikodym theorem.



From the expression of the value function in equation \ref{eq_countergan_nondiff_expectations}, Proposition 1 of Goodfellow et al. \cite{Goodfellow2014-wf} implies that for any generator $G$ the optimal discriminator is 
\begin{equation}
D^*(x) = \frac{p_{C_t}(x)}{p_{g_+}(x)+p_{C_t}(x)}.
\end{equation}

The value function for an ideal discriminator thus reads:
\begin{equation}
\begin{aligned}
    &\mathcal{V}^*(G) = \mathcal{V}(D^*, G)= \mathbb E_{x\sim p_{C_t}} \log \frac{p_{C_t}(x)}{p_{g_+}(x)+p_{C_t}(x)} \\
    &\qquad\qquad + \mathbb E_{x\sim p_{g_+}} \log \frac{p_{g_+}(x)}{p_{g_+}(x)+p_{C_t}(x)}.
\end{aligned}
\end{equation}

To find the distribution $p_{g_+}^*$ that minimizes $\mathcal{V}^*$ under the probability normalization constraint, $\int p_{g_+}(x) \mathrm{d}x = 1 $, we introduce a Lagrange multiplier $\mu$. We then compute the functional derivative of $\mathcal{V}^*$ with respect to $p_{g_+}$ using the shortened notation for $p = p_{C_t}(x)$ and $q = p_{g_+}(x)$ in the following equation
\begin{equation}
\begin{aligned}
    \frac{\delta \mathcal{V}^*}{\delta q} & = \frac{\partial}{\partial q}\left[p\log\left(\frac{p}{p+q}\right) + q\log\left(\frac{q}{p+q} \right) + \mu q\right] \\
    & = \log\left(\frac{q}{p+q}\right) +\mu.
\end{aligned}
\end{equation}

The optimum of $\mathcal{V}^*$ is attained for 
\begin{equation}
    \frac{\delta V}{\delta p_{g_+}^*}(x) = 0 \quad \Longleftrightarrow \quad p_{g_+}^*(x) = \frac{p_{C_t}(x)}{\exp(\mu) - 1},
\end{equation}
from which the normalization constraint leads to
\begin{equation}
\int \frac{p_{C_t}(x)}{\exp(\mu)- 1}\mathrm d x=1 \quad \Longleftrightarrow \quad \exp(\mu)=2,
\end{equation}
such that 
\begin{equation}
p_{g_+}^*(x) = p_{C_t}(x)
\end{equation}
for all $x$. Hence $\mathcal V^*$ has a unique optimum\footnote{The optimum is a minimum here since $\mathcal V^*$ is a convex functional of $p_{g_+}$, as can be seen from the form of the second functional derivative $\frac{\delta^2 V}{(\delta p_{g_+}^*)^2}(x) = \frac{p_{C_t}(x)}{p_{g_+}(x)(p_{g_+}(x)+p_{C_t}(x))}$, which is always positive.} that is reached when 
\begin{equation}
 p_{g_+}^* = p_{C_t}.
\end{equation}

The fact that $p_{g_+}$ converges to the optimum when using the alternating gradient updates follows from Proposition 2 in \cite{Goodfellow2014-wf}.
\end{proof}

Using either value function variant, the CounteRGAN discriminator learns to discriminate between real and synthetic data points, while the generator aims to balance the desired classification with realism and sparsity (actionability) constraints. As a result, the generator learns to produce residuals that, when added to the input, produce realistic and sparse counterfactuals that are classified by $C$ to be as close to 1 for the desired class as possible. After training, the generator is able to produce counterfactuals quickly via a forward-pass through the neural network based generator.

\begin{figure*}[htb!]
     \centering
     \begin{subfigure}[t]{0.33\textwidth}
         \centering
  \includegraphics[width=0.6\columnwidth]{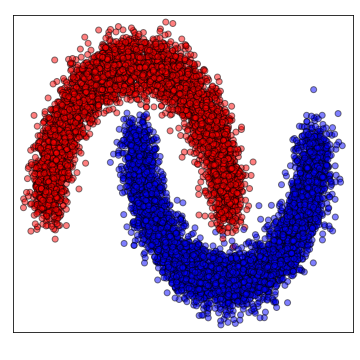} 
         \caption{Original distribution of data points.}
         \label{fig:toy_dataset_distr}
     \end{subfigure}
     \begin{subfigure}[t]{0.33\textwidth}
         \centering
  \includegraphics[width=0.6\columnwidth]{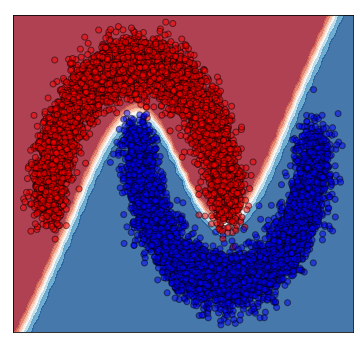} 
         \caption{Decision boundary of trained classifier.}
         \label{fig:toy_dataset_clf}
     \end{subfigure}
     \hfill
     \begin{subfigure}[t]{0.33\textwidth}
         \centering
  \includegraphics[width=0.6\columnwidth]{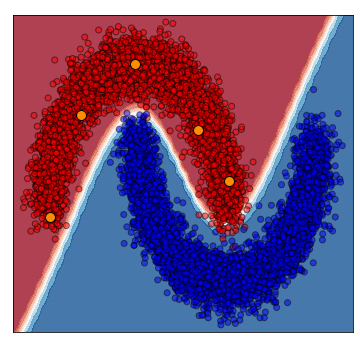} 
         \caption{Data points for counterfactuals search.}
         \label{fig:toy_dataset_samples}
     \end{subfigure}
     \begin{subfigure}[t]{0.33\textwidth}
         \centering
  \includegraphics[width=0.6\columnwidth]{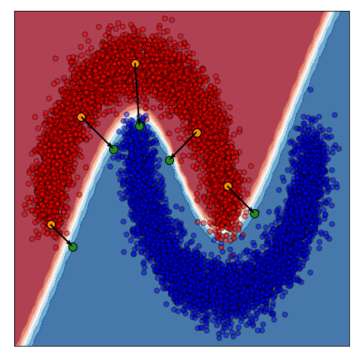} 
         \caption{Regularized gradient descent (RGD).}
         \label{fig:toy_dataset_vgd}
     \end{subfigure}
     \hfill
     \begin{subfigure}[t]{0.33\textwidth}
         \centering
\includegraphics[width=0.6\columnwidth]{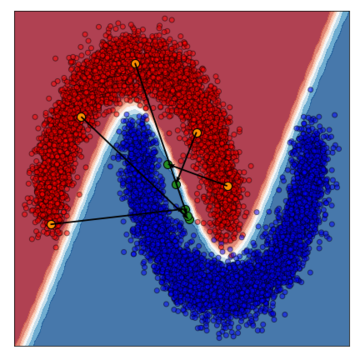}
         \caption{Standard GAN.}
         \label{fig:toy_dataset_regular_gan}
     \end{subfigure}
     \hfill
     \begin{subfigure}[t]{0.33\textwidth}
         \centering
\includegraphics[width=0.6\columnwidth]{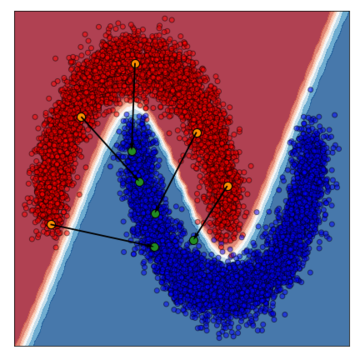}
         \caption{CounterGAN.}
         \label{fig:toy_dataset_countergan}
     \end{subfigure}
        \caption{Comparing how three different counterfactual search techniques are able to achieve their objectives while producing significantly different counterfactuals on a synthetic and binary class dataset.}
        \label{fig:toy_dataset}
\end{figure*}

\subsection{Synthetic dataset example}

Figure \ref{fig:toy_dataset} provides an example of counterfactual search using a synthetic dataset meant to illustrate the challenges faced by counterfactual generation methods. The data points shown in (a) can be interpreted as the known populations from two different societies (red/blue). An ML classifier has been trained to predict the type of society a person belongs to based on their weight (x-axis) and height (y-axis). The solid white line in (b) represents the classifier's decision boundary such that all predictions for points falling within the red shaded region are classified as persons belonging to the red society and vice-versa. The five selected orange points in (c) represent persons from the red society we seek to provide counterfactuals for. These counterfactuals should provide meaningful recourse regarding how to turn themselves into realistic looking persons of the blue society, as predicted by the classifier. The counterfactuals generated by an existing method (d) produce the correct classification result (blue) but the suggested changes would mean that the transformed individuals would not look like the rest of the known populace of the blue society (lack of realism). Using a standard GAN, the counterfactuals always result in the same or similar looking persons of the blue society. While these results are more realistic than those obtained with the previous method, the suggested changes may be harder to apply to some original persons than others (i.e., lower sparsity) and hence less actionable. The proposed CounteRGAN method (f) results in counterfactuals that are of the desired classification (blue) and are most realistic and actionable than those obtained with previous methods. Red society members seeking to imperceptibly infiltrate the blue society would benefit the most from the meaningful recourse provided by this method.

\section{Experiments \label{sec:experiments}}
\begin{figure*}[hbt!]
     \centering
     \begin{subfigure}[t]{0.19\textwidth}
         \centering
         \caption{RGD}
   \includegraphics[width=0.8\columnwidth]{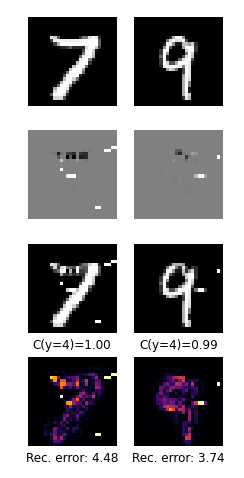} 
         \label{fig:mnist_vgd}
     \end{subfigure}
     \hfill
     \begin{subfigure}[t]{0.19\textwidth}
         \centering
         \caption{CSGP}
\includegraphics[width=0.8\columnwidth]{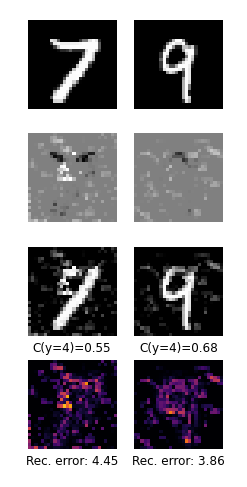}
         \label{fig:mnist_csgp}
     \end{subfigure}
     \hfill
     \begin{subfigure}[t]{0.19\textwidth}
         \centering
         \caption{Standard GAN}
\includegraphics[width=0.8\columnwidth]{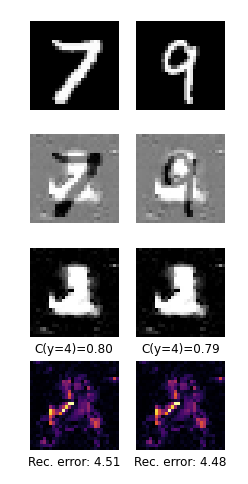}
         \label{fig:mnist_gan}
     \end{subfigure}
          \hfill
     \begin{subfigure}[t]{0.19\textwidth}
         \centering
         \caption{CounterGAN}
\includegraphics[width=0.8\columnwidth]{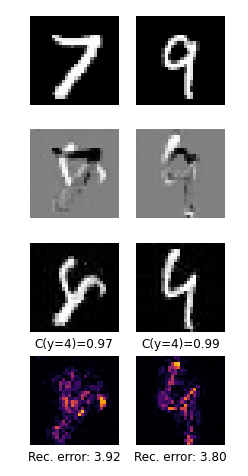}
         \label{fig:mnist_countergan}
     \end{subfigure}
          \hfill
     \begin{subfigure}[t]{0.19\textwidth}
         \centering
         \caption{CounterGAN-bb}
\includegraphics[width=0.8\columnwidth]{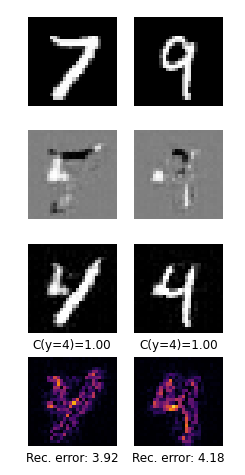}
         \label{fig:mnist_countergan_star}
     \end{subfigure}
        \caption{Comparison of counterfactual examples produced by different methods on MNIST. Given two separate digit images (7 and 9), each method is tasked with producing counterfactuals that the classifier will predict as a "4". The first row shows the original input image. The second row highlights the perturbations that the counterfactual produces (residuals in the case of CounteRGAN). Negative perturbation values are black, positive values are white, and null or zero values are grey. The third row shows the final counterfactual produced after adding the input with the perturbations. The fourth and final row displays the autoencoder reconstruction error with brighter points representing less realism. Existing methods (a) and (b) result in less realistic counterfactuals. Method (c) lacks realism as well as actionability due to mode collapse. The CounteRGAN methods (d) and (e) (black-box) result in the most realistic counterfactuals.}
        \label{fig:mnist_examples}
\end{figure*}

\noindent We compare the proposed CounteRGAN approach against two state-of-the-art counterfactual search methods \cite{Wachter2017-jr, Van_Looveren2019-hr}. As far as possible, our experiments mirror the experimental setups used in those proposals including the datasets and model architectures. 
The other counterfactual search methods mentioned in the Related Work section are not included either because they do not address realism \cite{Mothilal2020-ge, Laugel2017-tp} or because their latency is prohibitive for real-time applications \cite{Poyiadzi2019-rr}.
The first experiment is conducted using the MNIST handwritten digit dataset \cite{lecun-mnisthandwrittendigit-2010} which lends to providing visual clarity of each method's approach. The second experiment uses an Indian diabetes dataset \cite{diabetes_dataset} and helps to demonstrate that the CounteRGAN is also effective on tabular data and for diverse use cases. Lastly, the third experiment makes use of the COMPAS recidivism dataset \cite{Compas} to highlight how meaningful counterfactuals can be helpful for improving model interpretability and fairness.

\paragraph{Methods}
Given an input data point $x_i$, all methods described below aim to produce a counterfactual $x^\text{cf}_i$ that a target classifier $C$ will predict as the desired class.

\begin{itemize}
    \item \textit{Regularized Gradient Descent (RGD)}: a gradient descent based counterfactual search \cite{Wachter2017-jr} that minimizes the sums of the squared differences between the desired outcome and the counterfactual. A regularization term is used to enforce sparsity. \footnote{For this method and the next, we use the implementations (including gradient approximating versions for black-box models) provided by \url{https://github.com/SeldonIO/alibi}.}
    \item \textit{Counterfactual Search Guided by Prototypes (CSGP)}: this method \cite{Van_Looveren2019-hr} extends RGD by using class prototypes to push the counterfactual towards a more realistic data point of the desired class. The value function is modified to include a distance measure from the counterfactual to the class prototype in latent space ($L_\text{proto}$).
    \item \textit{Standard GAN (GAN)}: This method applies a standard GAN \cite{Goodfellow2014-wf}, in conjunction with the target classifier $C$. The generator is modified to use real data points as input (as opposed to random latent variables) and synthesize complete counterfactuals.
    \item \textit{CounteRGAN}: The proposed method from section \ref{sec:countergan} that uses the specialized RGAN together with the target classifier $C$. The value function from Equation \ref{eq_countergan_diff} is used when $C$ is a white-box model (i.e., known gradients) and Equation \ref{eq_countergan_nondiff} is used when $C$ is a black-box model (i.e., unknown or undefined gradients).
\end{itemize}

\renewcommand{\arraystretch}{1.4}
\begin{table}
\centering
\scriptsize
\begin{tabular}{ c | c  }
Metric & Formula \\
 \hline
 Counterfactual prediction gain & $\mathbb{E}\left[ C(x^{\text{cf}}_i) - C(x_i)\right]$ \\  
 Realism & $\mathbb{E}\left[  \left\Vert\text{AE}\left( x^{\text{cf}}_i \right) - x^{\text{cf}}_i\right\Vert_2^2\right]$ \\ 
 Actionability (Sparsity \& proximity) & $\mathbb{E}\left[ \left\Vert x^{\text{cf}}_i - x_i\right\Vert_1 \right]$\\
 Latency & $\mathbb{E}\left[\delta t_i \right]$\\
\end{tabular}
\medskip
\caption{Evaluation metrics summary. $C$ is the target classifier and $x_i$ denotes the data point for which a counterfactual ($x_i^\text{cf}$) is sought. An autoencoder ($\text{AE}$) is used to reconstruct $x_i^\text{cf}$. Expectations are computed over the test sets.}
\label{table:eval_metrics}
\end{table}

\paragraph{Evaluation metrics}

\renewcommand{\arraystretch}{1.6}
\begin{table*}[hbt!]
\scriptsize
\centering
\begin{tabular}{l||c|c|c|c||c|c|c}
 \multirow{2}{*}{} &           \multicolumn{4}{c||}{White-box classifier} & \multicolumn{3}{c}{Black-box classifier} \\ 
&                    RGD &                        CSGP &                          GAN &                  CounterGAN &    RGD & CSGP & CounterRGAN \\
\hline
$\uparrow$ Prediction gain & \textbf{0.83 $\pm$ 0.01} & 0.43 $\pm$ 0.00 & 0.69 $\pm$ 0.01 & 0.80 $\pm$ 0.01 & 0.45 $\pm$ 0.01 & 0.41 $\pm$ 0.00 & \textbf{0.85 $\pm$ 0.01} \\
$\downarrow$  Realism    & 4.56 $\pm$ 0.01   & 4.58 $\pm$ 0.01   & 4.50 $\pm$ 0.00             & \textbf{3.95 $\pm$ 0.01} & 3.94 $\pm$ 0.01 &  \textbf{3.58 $\pm$ 0.01} & 4.37 $\pm$ 0.01 \\
$\downarrow$  Actionability & \textbf{20.63 $\pm$ 0.41} & 54.24 $\pm$ 0.60 & 151.98 $\pm$ 0.43          & 79.47 $\pm$ 0.47 & \textbf{31.86 $\pm$ 0.61} & 48.79 $\pm$ 1.69 & 72.99 $\pm$ 0.52 \\
$\downarrow$  Latency (ms)  & 4,129.57 $\pm$ 3.33     & 5,359.58 $\pm$ 2.72    & \textbf{13.05 $\pm$ 0.04} & 13.33 $\pm$ 0.04     & 8,464.10 $\pm$ 42.54 &   30,235.47 $\pm$ 553.47  &  \textbf{13.52 $\pm$ 0.04} \\
$\downarrow$  Batch latency (seconds) & 4,129,570    & 5,359,580    & \textbf{45} & \textbf{45} & 84,641,012 & 302,354,681 &  \textbf{45} \\
\end{tabular}
 \medskip
\caption{MNIST test data results (mean and 95\% confidence interval). The arrows indicate whether larger $\uparrow$ or lower $\downarrow$ values are better, and the best results are in bold. The realism metric typically ranges from 3.89 (mean reconstruction error on the test set) to 11.99 (reconstruction error random uniform noise $[0, 1]$). Computations are performed using the entire test set (10,000 samples).}
\label{table:mnist_metrics}
\end{table*}

To evaluate the relative performance of the methods, we identify four desirable properties of counterfactual generation and propose the corresponding metrics detailed below (see Table \ref{table:eval_metrics} for a summary). All metric results from the experiments, except for batch latency, are based on averages of individually computed counterfactuals using the test data. Batch latency is the total computation time necessary to produce counterfactuals for an entire batch. Each table presents the results of the methods assuming that the target classifier's gradients are known (white-box model) or unknown (black-box model). 
\begin{itemize}
    \item \textit{Prediction gain}: the difference between the classifier's prediction on the counterfactual ($C_t(x^{\text{cf}}_i)$) and the input data point ($C_t(x_i)$), for the target class $t$. Since the maximum score classifier $C$ can predict is 1, the range for prediction gain is $[0,1]$ with higher gain indicating more improvement.
    \item \textit{Realism}: a measure of how well a counterfactual "fits in" with the known data distribution. We adopt a strategy inspired by \cite{Van_Looveren2019-hr, Dhurandhar2018-hk}, in which we train a denoising autoencoder $\text{AE}\left( \cdot\right)$ on the training set and use the L2 norm of the reconstruction error as a measure of realism. A lower value represents higher realism. 
    \item \textit{Actionability (sparsity \& proximity)}: a measure of the number and magnitude of perturbations present in the counterfactual ($x^{\text{cf}}_i$) relative to the input data point ($x_i$) using the L1 norm. A lower value corresponds to fewer changes and hence a higher degree of actionable feedback. Sparsity and proximity are commonly used \cite{Mothilal2020-ge}, albeit imperfect, proxies for true actionability which is inherently difficult to quantify and a promising area for future work.
    \item \textit{Latency}: the computational latency needed to generate counterfactuals. Individual counterfactual computations can impact real-time applicability. Batch results are useful to highlight scalability limitations since large amounts of counterfactuals may be desired to be generated without real-time constraints but within practical latency and cost budgets. Lower values are better and subsecond latencies are necessary for real-time applicability.
\end{itemize}

\vspace{-0.28cm}
\subsection{First Experiment: MNIST image dataset}
MNIST consists of 70,000 images of handwritten digits (28x28 black and white pixels, that we normalize to have values between 0 and 1) with equal amounts of samples for each digit class. The images are split for training and testing with 60,000 and 10,000 samples respectively, both of which are balanced in terms of labels.

A convolutional neural network (CNN) is used as the target classifier which is trained to correctly classify the digits (98.6\% accuracy on the test set). In addition to the classifier, we train a denoising convolutional autoencoder that is used to gauge counterfactual realism. Each method is tasked with generating counterfactuals that the classifier should predict as a "4" digit. All results are based on the averages from generating counterfactuals for all of the 10,000 samples from the test set. 


Examples of counterfactuals for two digits are shown in Figure \ref{fig:mnist_examples}. For those examples, all methods successfully produce counterfactuals that are labeled as "4" by the classifier, with predicted probabilities ranging from 0.55 to 1. The RGD method (Figure \ref{fig:mnist_vgd}) suggests counterfactuals that are more similar to adversarial attacks in the sense that they consist of subtle perturbations that lead to the desired classification but are highly unrealistic. The CSGP algorithm (Figure \ref{fig:mnist_csgp}) seems to perform better visually, affecting relevant pixels to turn the digits into the desired "4" but still lacks realism. The counterfactual search with a regular GAN (Figure \ref{fig:mnist_gan}) saliently exhibits mode collapse. Without the residual formulation, the generator simply learns to generate the same image regardless of the input. The two CounterGAN formulations (Figures \ref{fig:mnist_countergan} and \ref{fig:mnist_countergan_star}) output visually convincing counterfactuals, as corroborated by the large classifier scores (0.97 to 1) and low autoencoder reconstruction errors.

The complete metrics results for the MNIST dataset are presented in Table \ref{table:mnist_metrics}. While all methods largely increase the prediction of the target class, CSGP is noticeably less impactful. The RGD method outputs sparser counterfactuals at the significant cost of realism. The two CounterGAN variants, by contrast, generate the most realistic counterfactuals with high actionability and prediction gain. Notably, the GAN and proposed CounteRGAN approaches also achieve >300x and >600x latency improvements over existing methods when generating single counterfactuals on white-box and black-box classifiers respectively. On a batch of the full 10000 samples from the test set, the GAN based methods achieve an impressive 5 to 7 orders of magnitude improvement. 

Although we have not included additional image datasets in our experiments, the same methodology can be applied to large-scale image datasets such as ImageNet \cite{deng2009imagenet} or CelebA \cite{liu2015faceattributes} with no modification. While the CounteRGAN setup and training principles would remain the same, the extension to larger datasets would require more complex generator and discriminator architectures, and a more extensive optimization of the training parameters. Because this work focuses on actionable feedback rather than the generation of realistic images, we leave applications of the CounteRGAN to large-scale datasets for future work and turn our attention to tabular data in the following.


\renewcommand{\arraystretch}{1.6}
\begin{table*}[hbt!]
\scriptsize
\centering
\begin{tabular}{l||c|c|c|c||c|c|c}
 \multirow{2}{*}{} &           \multicolumn{4}{c||}{White-box classifier} & \multicolumn{3}{c}{Black-box classifier} \\ 
&                    RGD &                        CSGP &                          GAN &                  CounterGAN &    RGD & CSGP & CounterRGAN \\
\hline
$\uparrow$ Prediction gain &      0.15 $\pm$ 0.01 &           0.13 $\pm$ 0.02 &            0.15 $\pm$ 0.03 &  \textbf{0.33 $\pm$ 0.04} &  \textbf{0.17 $\pm$ 0.00} & 0.13 $\pm$ 0.00 & \textbf{0.16 $\pm$ 0.02} \\
$\downarrow$  Realism      &      2.20 $\pm$ 0.24 &           2.03 $\pm$ 0.11 &            3.33 $\pm$ 0.11 &  \textbf{1.79 $\pm$ 0.11} &  2.22 $\pm$ 0.01  & \textbf{1.98 $\pm$ 0.01} & 2.13 $\pm$ 0.12 \\
$\downarrow$  Actionability  &      1.64 $\pm$ 0.20 &  \textbf{1.14 $\pm$ 0.19} &            9.46 $\pm$ 0.53 &           6.91 $\pm$ 0.43 & 1.75 $\pm$ 0.02  & \textbf{1.29 $\pm$ 0.02} & 2.97 $\pm$ 0.12  \\
$\downarrow$  Latency (ms) &  1,195.91 $\pm$ 5.65 &       3,211.67 $\pm$ 11.65 &  1.68 $\pm$ 0.06 &          \textbf{1.51 $\pm$ 0.03} & 2,525.99 $\pm$ 1.23  & 15,921 $\pm$ 23.66 &  \textbf{1.82 $\pm$ 0.12} \\
$\downarrow$  Batch latency (seconds) & 204.58 & 483.88 & 0.26 & \textbf{0.23} & 453.45  & 2,228.23 & \textbf{0.32}
\end{tabular}
\medskip
\caption{Diabetes test data results (mean and 95\% confidence interval). The arrows indicate whether larger $\uparrow$ or lower $\downarrow$ values are better, and the best results are in bold. The realism metric typically ranges from 1.844  (mean reconstruction error on the test set) to 2.443 (reconstruction error on random Gaussian noise). Computations are performed using the entire test set (154 samples).}
\label{table:diabetes_metrics}
\end{table*}

\vspace{-0.28cm}
\subsection{Second experiment: Pima Indians Diabetes dataset}
Following the experiments in Wachter et al. \cite{Wachter2017-jr}, we utilize the Pima Indians Diabetes dataset \cite{diabetes_dataset}. It is composed of low dimensional tabular data and helps to validate the CounteRGAN's versatility and its applicability to diverse use cases. The dataset contains 8 features describing the relevant characteristics of patients useful for predicting diabetes. The target label is positive if the patient has diabetes (268 examples) and negative otherwise (500 examples). We use stratified (label balanced) sampling with 80\% of the dataset being assigned to the train set and the remaining 20\% for the test set. The classifier is the same as the neural network architecture used in \cite{Wachter2017-jr} and achieves an accuracy of 74.68\% on the test set.

For this experiment we introduce the important concept of \textit{mutable} and \textit{immutable features}. For most practical applications of counterfactual search, certain features may be hard or impossible to change and can be considered immutable. Though features typically vary in their degree of mutability, for the purposes of this experiment we consider features as either mutable or immutable. For the Pima Indians Diabetes dataset, we consider \textit{Pregnancies}, \textit{Age}, and \textit{Diabetes Pedigree Function} features to be immutable. We use \textit{Glucose}, \textit{Insulin}, \textit{Body Mass Index}, \textit{Tricept Skin Fold Thickness}, and \textit{Blood Pressure} as mutable features. In practice, we apply counterfactual search with no modifications, then simply cancel the perturbations applied to immutable features.




Table \ref{table:diabetes_metrics} summarizes our findings for this experiment. On this dataset, all methods appear equally capable of improving classifier prediction gain. The CounterGAN generates more realistic instances, and the CSGP outputs the sparsest counterfactuals. Even on this low-dimensional dataset, the CounteRGAN is able to meet or exceed the evaluation metrics of counterfactuals produced by existing methods while heavily outperforming them in terms of latency. This includes >1,000x to >2,000x improvements for individual counterfactuals on white-box and black-box models respectively and from 3 to 4 orders of magnitude for batch generation of all counterfactuals.  

The evaluation results validate that the proposed CounteRGAN method is capable of overcoming the main limitations of existing methods, namely the lack of realism and high latency. It also provides similar or better prediction gain and actionability on high dimensional images and a low-dimensional tabular dataset. The impressive latency improvements are pivotal with regard to real-time applicability and scalability. This is due to the generator only needing a forward-pass through the neural network as opposed to performing a new counterfactual search for every data point, as required by existing methods. 


\renewcommand{\arraystretch}{1.6}
\begin{table*}[hbt!]
\scriptsize
\centering
\begin{tabular}{l||c|c|c|c||c|c|c}
 \multirow{2}{*}{} &           \multicolumn{4}{c||}{White-box classifier} & \multicolumn{3}{c}{Black-box classifier} \\ 
&                    RGD &                        CSGP &                          GAN &                  CounterGAN &    RGD & CSGP & CounterRGAN \\
\hline
↑ Prediction gain    &     \textbf{0.38 ± 0.01} &     0.06 ± 0.01 &  0.29 ± 0.01 &  0.07 ± 0.01 & \textbf{0.38 ± 0.01} & 0.06 ± 0.01 & 0.12 ± 0.01 \\
↓ Realism & 1.60 ± 0.08 & 0.78 ± 0.06 & \textbf{0.57 ± 0.00} & 0.85 ± 0.09 & 1.60 ± 0.08 & \textbf{0.77  ± 0.06} &0.93 ± 0.09  \\
↓ Sparsity           &     2.07 ± 0.05 &     \textbf{0.53 ± 0.08} &  7.32 ± 0.16 &  0.85 ± 0.05 & 2.07 ± 0.05 & \textbf{0.50 ± 0.08} & 1.48 ± 0.08 \\
↓ Latency (ms)       &  1,704.62 ± 2.12 &  3,312.14 ± 5.46 &  \textbf{1.39 ± 0.01} &  1.43 ± 0.01 & 3,005.13 ± 2.35 & 9,894.08 ± 51.75 &  \textbf{1.42 ± 0.12} \\
↓ Batch latency (s) &       2,459.76 &       4,779.42 &       \textbf{2.00} &       2.06 &  4,336.40 & 14,277.15 &     \textbf{2.04} \\

\end{tabular}
\medskip
\caption{COMPAS test data results (mean and 95\% confidence interval). The arrows indicate whether larger $\uparrow$ or lower $\downarrow$ values are better, and the best results are in bold. The realism metric typically ranges from 0.87 (mean reconstruction error on the test set) to 5.43 (reconstruction error of random uniform noise in $[0, 1]$).}
\label{table:compas_metrics}
\end{table*}

\subsection{Third experiment: COMPAS recidivism dataset}



Certain applications of predictive models can have permanent life-altering consequences for individuals.
Within criminal justice systems, for instance,
recidivism prediction models such as the COMPAS score \cite{Compas} are consulted to guide criminal sentencing in several states and major cities in the United States \cite{WeaponsMathDestruction}. This experiment showcases how meaningful counterfactuals can be applied towards improving model interpretability and fairness by exposing biases, including racial and gender biases which are harmful and pervasive in our societies.



\renewcommand{\arraystretch}{1.2}
\begin{table*}
\scriptsize
\centering
\begin{tabular}{l||c||c|c|c|c||c|c|c}
 \multirow{2}{*}{} &    \multirow{2}{*}{Initial values}    &    \multicolumn{4}{c||}{White-box classifier} & \multicolumn{3}{c}{Black-box classifier} \\ 
&           &          RGD &                        CSGP &                          GAN &                  CounterGAN &    RGD & CSGP & CounterRGAN \\
\hline
age                                         &             24 &     - &   +1 &         +23 &         +6 &     - &  +2 &       +12 \\
priors\_count                                &              3 &    -9 &   -1 &          -4 &         -2 &   -9 & -1    &      -1 \\
days\_b\_screening\_arrest                     &             -1 &    -1 &    - &          -3 &          - &      -1  & - &   -12 \\
sex\_Male                                    &              1 &     - &    - &          -1 &          - &   - & - &          - \\
c\_charge\_degree\_M                           &              0 &     - &    - &          +1 &          - &       - & - &      - \\
c\_charge\_desc\_Pos Cannabis W/Intent Sel/Del &              1 &     - &    - &          -1 &          - & - & - &           -1 \\
c\_charge\_desc\_Possession of Cocaine         &              0 &     - &    - &           - &          - &        - & - &    +1 \\
race\_Caucasian                              &              0 &     - &    - &          +1 &          - &   - & - &         +1 \\
\hline
Classifier Prediction (prob of not recidivating)                      &           0.36 &  0.99 &  0.50 &        0.87 &       0.71 &    0.99 & 0.52 &      0.54 \\
\end{tabular}
\medskip
\caption{Comparison of counterfactual examples produced by different methods given a sample data point from the COMPAS recidivism dataset. The leftmost column shows the features that are perturbed by at least one of the evaluated counterfactual search method. The bottom row shows the likelihood of not recidivating, as predicted by the classifier. Note that some of the counterfactuals suggest changing the race and gender to alter the prediction indicating that the classifier and dataset holds potentially unfair biases.}
\label{table:compas_example}
\end{table*}

The COMPAS dataset consists of 7,214 arrests logged in Broward County, Florida, and contains 29 features describing the demographics and criminal history of the defendants. The binary target label is positive if the defendant did not re-offend within two years after the arrest (55\% of the data) and negative if they did (45\% of the data). Numerical features are standardized and categorical variables are one-hot-encoded. The one-hot-encoded features are then perturbed in the same fashion as the numerical features and then rounded to the closest binary value for the final residuals. \footnote{An alternative approach would be to handle categorical features using pairwise distance measures and multi-dimensional scaling \cite{Van_Looveren2019-hr}.}
Since the main objective of this scenario is aimed at model interpretability and fairness, we do not differentiate between mutable and immutable features. We randomly assign 80\% of samples to the train set and the remaining 20\% to the test set. A neural network with three hidden layers is trained and reaches an accuracy of 69.72\% on the test set. A threshold of 0.5 is chosen for determining whether an individual will recidivate ($<$0.5) or not ($\ge$0.5).

Table \ref{table:compas_metrics} presents the results for the counterfactual search methods on the COMPAS test set. Similar to previous experiments, the RGD approach tends to produce unrealistic counterfactuals with large increases to the classifier's prediction. Conversely, CSGP typically leads to small increases of the classifier score but outputs sparser and more realistic perturbations. The regular GAN method achieves decent gains in prediction score and realism but suffers greatly with respect to sparsity and hence actionability. The CounteRGAN methods proposed in this work are more satisfying than RGD in terms of realism and sparsity. They also achieve similar increases of the classifier prediction as CSGP and produce counterfactuals >1,000x faster than RGD and CSGP.

Specific examples are relevant for investigating what, if any, biases a classifier has learned. Table \ref{table:compas_example} presents one such data point from the test set. It compares the original feature values with those from counterfactuals produced by every method included in our benchmark. Each method is able to generate a counterfactual that successfully reverts the prediction although they propose very different perturbations to the features. RGD suggests an unrealistic change that corresponds to a negative number of prior offenses. CSGP is able to barely flip the prediction (score $\ge$0.5) with minimal and realistic changes. Though general conclusions should be based on subsequent analysis of complete datasets, counterfactuals such as these can help to probe a classifier's decision boundary in the vicinity of individual data points. By illuminating regions in the feature space where the classifier predicts non-recidivism, they can serve a pertinent role in understanding the impact and relation certain feature value changes will have on the final prediction, thereby adding to a model's interpretability. 

Interestingly, the counterfactuals produced by the GAN and CounteRGAN methods for black-box classifiers find that changing the race to "Caucasian" instead of "Black" contributes to reversing the prediction. In addition, the GAN counterfactual also suggests changing the gender from "Male" to "Female". These insights signal that the recidivism predictor likely holds unfair biases. By extension, these biases can also be manifest in the COMPAS dataset. This is not necessarily certain, however, since it may have been by chance that the training subset was unbalanced and the model simply picked up on these spurious biases. Insights such as these illustrate the potential counterfactuals have in helping to audit the fairness of ML systems which should be of paramount relevance to all practitioners.

\section{Discussion \label{sec:discussion}}
 \noindent The proposed CounteRGAN approach applies a novel Residual GAN (RGAN) together with a fixed target classifier to produce realistic and actionable counterfactuals that achieve favorable prediction increases at low fixed latencies. Separately defined value functions allow the CounteRGAN to work effectively even when the target classifier is non-differentiable or a black-box model. Experiments on three diverse datasets show that the CounteRGAN is able to generate counterfactuals at 2 to 7 orders of magnitude faster than two state-of-the-art methods. The drop from requiring seconds to milliseconds opens the door to real-time applicability. The resulting counterfactuals are generally more realistic than competing methods while matching or exceeding prediction gain and actionability. In use cases such as criminal justice which can have pivotal consequences for users, this approach has also shown encouraging promise for probing a classifier's decision boundaries and highlighting potentially unfair biases. Meaningful counterfactuals, such as those produced using the CounteRGAN method, can provide real-time recourse to users and help improve model interpretability and fairness. Together, these form the critical foundations for building effective, scalable, and trustworthy ML systems.

Several promising areas outside the scope of this work are left for future research. These include investigating additional techniques to quantify and ensure actionability, applying the RGAN and CounteRGAN to additional domains, improving the training and architecture, addressing partially mutable features, and experimenting with iteratively improving the counterfactuals by creating a feedback loop to the generator.

\newpage
\paragraph{Ethical Statement}
\noindent 
Realistic and actionable counterfactuals, as provided by the CounteRGAN, are able to suggest meaningful recourse to users seeking to understand how to improve relevant  outcomes decided by ML predictions. This, in effect helps to provide a measure of transparency and opportunity to users. This can have positive and even life-altering effects since many of the targeted predictors are found in essential industries such as healthcare, finance, human resources, and criminal justice systems. Counterfactuals can also flag problematic biases of models used in production, helping to ensure fairness.
Similar to adversarial attacks, however, this method could be utilized for malicious or nefarious ends. For example, to produce realistic misinformation capable of fooling detectors on social media sites or aiding in financial fraud. 
The social and ethical impact of this work, therefore, are of great potential. In order to avoid detrimental behaviours, however, practitioners should exercise caution with respect to what recourses are shared with the users.



\bibliography{rgan_counterfactuals.bib}

\begin{thebibliography}{10}

\bibitem{Miotto2018-ah}
Riccardo Miotto, Fei Wang, Shuang Wang, Xiaoqian Jiang, and Joel~T Dudley.
\newblock Deep learning for healthcare: review, opportunities and challenges.
\newblock {\em Brief. Bioinform.}, 19(6):1236--1246, November 2018.

\bibitem{Addo2018-gl}
Peter~Martey Addo, Dominique Guegan, and Bertrand Hassani.
\newblock Credit risk analysis using machine and deep learning models.
\newblock {\em Risks}, 6(2):38, April 2018.

\bibitem{Faliagka2012-pg}
Evanthia Faliagka, Kostas Ramantas, Athanasios Tsakalidis, and Giannis Tzimas.
\newblock Application of machine learning algorithms to an online recruitment
  system.
\newblock In {\em Proc. International Conference on Internet and Web
  Applications and Services}. Citeseer, 2012.

\bibitem{Tollenaar2013-no}
N~Tollenaar and P~G~M van~der Heijden.
\newblock Which method predicts recidivism best?: a comparison of statistical,
  machine learning and data mining predictive models.
\newblock {\em J. R. Stat. Soc. Ser. A Stat. Soc.}, 176(2):565--584, 2013.

\bibitem{Wachter2017-vk}
Sandra Wachter, Brent Mittelstadt, and Luciano Floridi.
\newblock Why a right to explanation of automated {Decision-Making} does not
  exist in the general data protection regulation.
\newblock {\em International Data Privacy Law}, 7(2):76--99, June 2017.

\bibitem{Selbst2017-zd}
Andrew~D Selbst and Julia Powles.
\newblock Meaningful information and the right to explanation.
\newblock {\em International Data Privacy Law}, 7(4):233--242, December 2017.

\bibitem{Goodman2016-co}
Bryce Goodman and Seth Flaxman.
\newblock European union regulations on algorithmic decision-making and a
  ``right to explanation''.
\newblock June 2016.

\bibitem{Ribeiro2016-fa}
Marco~Tulio Ribeiro, Sameer Singh, and Carlos Guestrin.
\newblock ``why should {I} trust you?'': Explaining the predictions of any
  classifier.
\newblock February 2016.

\bibitem{Lundberg2017-ly}
Scott~M Lundberg and Su-In Lee.
\newblock A unified approach to interpreting model predictions.
\newblock In I~Guyon, U~V Luxburg, S~Bengio, H~Wallach, R~Fergus,
  S~Vishwanathan, and R~Garnett, editors, {\em Advances in Neural Information
  Processing Systems 30}, pages 4765--4774. Curran Associates, Inc., 2017.

\bibitem{Sundararajan2017-cq}
Mukund Sundararajan, Ankur Taly, and Qiqi Yan.
\newblock Axiomatic attribution for deep networks.
\newblock March 2017.

\bibitem{Selvaraju_2019}
Ramprasaath~R. Selvaraju, Michael Cogswell, Abhishek Das, Ramakrishna Vedantam,
  Devi Parikh, and Dhruv Batra.
\newblock Grad-cam: Visual explanations from deep networks via gradient-based
  localization.
\newblock {\em International Journal of Computer Vision}, 128(2):336–359, Oct
  2019.

\bibitem{Chattopadhay_2018}
Aditya Chattopadhay, Anirban Sarkar, Prantik Howlader, and Vineeth~N
  Balasubramanian.
\newblock Grad-cam++: Generalized gradient-based visual explanations for deep
  convolutional networks.
\newblock {\em 2018 IEEE Winter Conference on Applications of Computer Vision
  (WACV)}, Mar 2018.

\bibitem{Wachter2017-jr}
Sandra Wachter, Brent Mittelstadt, and Chris Russell.
\newblock Counterfactual explanations without opening the black box: Automated
  decisions and the {GDPR}.
\newblock November 2017.

\bibitem{Van_Looveren2019-hr}
Arnaud Van~Looveren and Janis Klaise.
\newblock Interpretable counterfactual explanations guided by prototypes.
\newblock July 2019.

\bibitem{Mothilal2020-ge}
Ramaravind~K Mothilal, Amit Sharma, and Chenhao Tan.
\newblock Explaining machine learning classifiers through diverse
  counterfactual explanations.
\newblock In {\em Proceedings of the 2020 Conference on Fairness,
  Accountability, and Transparency}, FAT* '20, pages 607--617, New York, NY,
  USA, January 2020. Association for Computing Machinery.

\bibitem{Goodfellow2014-wf}
Ian Goodfellow, Jean Pouget-Abadie, Mehdi Mirza, Bing Xu, David Warde-Farley,
  Sherjil Ozair, Aaron Courville, and Yoshua Bengio.
\newblock Generative adversarial nets.
\newblock In Z~Ghahramani, M~Welling, C~Cortes, N~D Lawrence, and K~Q
  Weinberger, editors, {\em Advances in Neural Information Processing Systems
  27}, pages 2672--2680. Curran Associates, Inc., 2014.

\bibitem{Lewis1973-al}
David~K Lewis.
\newblock Causation.
\newblock {\em J. Philos.}, 70(17):556--567, 1973.

\bibitem{pearl_causality}
Judea Pearl.
\newblock {\em Causality: Models, Reasoning and Inference}.
\newblock Cambridge University Press, USA, 2nd edition, 2009.

\bibitem{Poyiadzi2019-rr}
Rafael Poyiadzi, Kacper Sokol, Raul Santos-Rodriguez, Tijl De~Bie, and Peter
  Flach.
\newblock {FACE}: Feasible and actionable counterfactual explanations.
\newblock September 2019.

\bibitem{Dhurandhar2018-hk}
Amit Dhurandhar, Pin-Yu Chen, Ronny Luss, Chun-Chen Tu, Paishun Ting,
  Karthikeyan Shanmugam, and Payel Das.
\newblock Explanations based on the missing: Towards contrastive explanations
  with pertinent negatives.
\newblock In S~Bengio, H~Wallach, H~Larochelle, K~Grauman, N~Cesa-Bianchi, and
  R~Garnett, editors, {\em Advances in Neural Information Processing Systems
  31}, pages 592--603. Curran Associates, Inc., 2018.

\bibitem{kim_2016}
Been Kim, Rajiv Khanna, and Oluwasanmi~O Koyejo.
\newblock Examples are not enough, learn to criticize! criticism for
  interpretability.
\newblock In D.~D. Lee, M.~Sugiyama, U.~V. Luxburg, I.~Guyon, and R.~Garnett,
  editors, {\em Advances in Neural Information Processing Systems 29}, pages
  2280--2288. Curran Associates, Inc., 2016.

\bibitem{Laugel2017-tp}
Thibault Laugel, Marie-Jeanne Lesot, Christophe Marsala, Xavier Renard, and
  Marcin Detyniecki.
\newblock Inverse classification for comparison-based interpretability in
  machine learning.
\newblock December 2017.

\bibitem{Goodfellow2014-yo}
Ian~J Goodfellow, Jonathon Shlens, and Christian Szegedy.
\newblock Explaining and harnessing adversarial examples.
\newblock December 2014.

\bibitem{Su2017-pk}
Jiawei Su, Danilo~Vasconcellos Vargas, and Sakurai Kouichi.
\newblock One pixel attack for fooling deep neural networks.
\newblock October 2017.

\bibitem{Salimans2016-wx}
Tim Salimans, Ian Goodfellow, Wojciech Zaremba, Vicki Cheung, Alec Radford, and
  Xi~Chen.
\newblock Improved techniques for training gans.
\newblock In {\em Advances in neural information processing systems}, pages
  2234--2242, 2016.

\bibitem{Arjovsky2017-rx}
Martin Arjovsky, Soumith Chintala, and L{\'e}on Bottou.
\newblock Wasserstein {GAN}.
\newblock January 2017.

\bibitem{Radford2015-qx}
Alec Radford, Luke Metz, and Soumith Chintala.
\newblock Unsupervised representation learning with deep convolutional
  generative adversarial networks.
\newblock November 2015.

\bibitem{Denton2015-ka}
Emily Denton, Soumith Chintala, Arthur Szlam, and Rob Fergus.
\newblock Deep generative image models using a laplacian pyramid of adversarial
  networks.
\newblock June 2015.

\bibitem{Zhang2016-qc}
Han Zhang, Tao Xu, Hongsheng Li, Shaoting Zhang, Xiaogang Wang, Xiaolei Huang,
  and Dimitris Metaxas.
\newblock {StackGAN}: Text to photo-realistic image synthesis with stacked
  generative adversarial networks.
\newblock December 2016.

\bibitem{Mirza2014-kt}
Mehdi Mirza and Simon Osindero.
\newblock Conditional generative adversarial nets.
\newblock November 2014.

\bibitem{Odena2017-jl}
Augustus Odena, Christopher Olah, and Jonathon Shlens.
\newblock Conditional image synthesis with auxiliary classifier {GANs}.
\newblock In {\em Proceedings of the 34th International Conference on Machine
  Learning - Volume 70}, ICML'17, pages 2642--2651. JMLR.org, August 2017.

\bibitem{Chen2016-tt}
Xi~Chen, Yan Duan, Rein Houthooft, John Schulman, Ilya Sutskever, and Pieter
  Abbeel.
\newblock {InfoGAN}: Interpretable representation learning by information
  maximizing generative adversarial nets.
\newblock In D~D Lee, M~Sugiyama, U~V Luxburg, I~Guyon, and R~Garnett, editors,
  {\em Advances in Neural Information Processing Systems 29}, pages 2172--2180.
  Curran Associates, Inc., 2016.

\bibitem{Tran2017-vc}
Luan Tran, Xi~Yin, and Xiaoming Liu.
\newblock Disentangled representation learning gan for pose-invariant face
  recognition.
\newblock In {\em Proceedings of the {IEEE} conference on computer vision and
  pattern recognition}, pages 1415--1424, 2017.

\bibitem{Isola2016-kf}
Phillip Isola, Jun-Yan Zhu, Tinghui Zhou, and Alexei~A Efros.
\newblock Image-to-image translation with conditional adversarial networks.
\newblock November 2016.

\bibitem{Zhu2017-mz}
Jun-Yan Zhu, Taesung Park, Phillip Isola, and Alexei~A Efros.
\newblock Unpaired {Image-to-Image} translation using {Cycle-Consistent}
  adversarial networks, 2017.

\bibitem{Zhu2017-sf}
Jun-Yan Zhu, Richard Zhang, Deepak Pathak, Trevor Darrell, Alexei~A Efros,
  Oliver Wang, and Eli Shechtman.
\newblock Toward multimodal {Image-to-Image} translation.
\newblock In I~Guyon, U~V Luxburg, S~Bengio, H~Wallach, R~Fergus,
  S~Vishwanathan, and R~Garnett, editors, {\em Advances in Neural Information
  Processing Systems 30}, pages 465--476. Curran Associates, Inc., 2017.

\bibitem{Huang2017-ii}
Xun Huang and Serge Belongie.
\newblock Arbitrary style transfer in real-time with adaptive instance
  normalization.
\newblock March 2017.

\bibitem{Karras2020-uk}
Tero Karras, Samuli Laine, and Timo Aila.
\newblock A {Style-Based} generator architecture for generative adversarial
  networks.
\newblock {\em IEEE Trans. Pattern Anal. Mach. Intell.}, January 2020.

\bibitem{Wang2017-fj}
Jifeng Wang, Xiang Li, Le~Hui, and Jian Yang.
\newblock Stacked conditional generative adversarial networks for jointly
  learning shadow detection and shadow removal.
\newblock December 2017.

\bibitem{Zhang2019-kr}
Ling Zhang, Chengjiang Long, Xiaolong Zhang, and Chunxia Xiao.
\newblock {RIS-GAN}: Explore residual and illumination with generative
  adversarial networks for shadow removal.
\newblock November 2019.

\bibitem{Shen2016-ge}
Wei Shen and Rujie Liu.
\newblock Learning residual images for face attribute manipulation.
\newblock December 2016.

\bibitem{Tavakolian2019-kq}
M~Tavakolian, C~G Bermudez~Cruces, and A~Hadid.
\newblock Learning to detect genuine versus posed pain from facial expressions
  using residual generative adversarial networks.
\newblock In {\em 2019 14th {IEEE} International Conference on Automatic Face
  Gesture Recognition ({FG} 2019)}, pages 1--8, May 2019.

\bibitem{lecun-mnisthandwrittendigit-2010}
Yann LeCun and Corinna Cortes.
\newblock {MNIST} handwritten digit database.
\newblock 2010.

\bibitem{diabetes_dataset}
Jack~W Smith, J~E Everhart, W~C Dickson, W~C Knowler, and R~S Johannes.
\newblock Using the {ADAP} learning algorithm to forecast the onset of diabetes
  mellitus.
\newblock {\em Proceedings of the Annual Symposium on Computer Application in
  Medical Care}, page 261, November 1988.

\bibitem{Compas}
ProPublica.
\newblock Compas, 2017.

\bibitem{deng2009imagenet}
Jia Deng, Wei Dong, Richard Socher, Li-Jia Li, Kai Li, and Li~Fei-Fei.
\newblock Imagenet: A large-scale hierarchical image database.
\newblock In {\em 2009 IEEE conference on computer vision and pattern
  recognition}, pages 248--255. Ieee, 2009.

\bibitem{liu2015faceattributes}
Ziwei Liu, Ping Luo, Xiaogang Wang, and Xiaoou Tang.
\newblock Deep learning face attributes in the wild.
\newblock In {\em Proceedings of International Conference on Computer Vision
  (ICCV)}, December 2015.

\bibitem{WeaponsMathDestruction}
Cathy O'Neil.
\newblock {\em Weapons of Math Destruction: How Big Data Increases Inequality
  and Threatens Democracy}.
\newblock Crown Publishing Group, USA, 2016.

\end{thebibliography}
\bibliographystyle{unsrt} 
\end{document}